\documentclass{article}

\usepackage{arxiv}

\usepackage[utf8]{inputenc} 
\usepackage[T1]{fontenc}    
\usepackage{hyperref}       
\usepackage{url}            
\usepackage{booktabs}       
\usepackage[sort&compress,numbers]{natbib}
\usepackage{amsfonts}       
\usepackage{nicefrac}       
\usepackage{microtype}      
\usepackage{lipsum}
\usepackage{graphicx}
\graphicspath{ {./images/} }

\usepackage{amsfonts}
\usepackage{amsmath}
\usepackage{amssymb}
\usepackage{amsthm}
\newtheorem{theorem}{Theorem}[section]

\newtheorem{definition}[theorem]{Definition}

\newcommand{\T}{^{\mathsf{T}}}

\usepackage{xcolor}
\usepackage{tcolorbox}

\title{
    Similarity of Neural Network Representations in Superposition
}

\author{
 Sunny Liu\textsuperscript{*}\\
  Cold Spring Harbor Laboratory\\
  \texttt{sliu@cshl.edu} \\
   \And
 Habon Issa\textsuperscript{*}\\
  Cold Spring Harbor Laboratory\\
  \texttt{issa@cshl.edu} \\
  \And
  André Longon\\
  UC San Diego\\
  \texttt{alongon@ucsd.edu}\\
  \And
  Liv Gorton\\
  Anthropic\\
  \And
  Meenakshi Khosla\\
  UC San Diego\\
  \And
  Alex Williams\\
  New York University\\
   Flatiron Institute\\
  \And
  David Klindt \\
  Cold Spring Harbor Laboratory\\
  \texttt{klindt@cshl.edu} \\
}

\begin{document}
\maketitle
{\renewcommand{\thefootnote}{}\footnotetext{\textsuperscript{*}These authors contributed equally to this work.}}
\begin{abstract}
Comparing internal representations is a central goal in neuroscience and machine learning, but standard linear alignment metrics (Representational Similarity Analysis, Centered Kernel Alignment, and linear regression) are frequently applied to neural activity coordinates rather than on the underlying features. We show this matters when neural systems operate in \textit{superposition}, encoding more features than they have neurons via linear compression. Closed-form derivations prove that these metrics depend on the Gram matrices of each system's projection, not on the latent features themselves: alignment thus combines \emph{what} a system represents with \emph{how} it is encoded. For those interested in what features two systems share, this is a problem: Two networks can have identical feature content yet appear more dissimilar than networks exhibiting partial feature overlap. This apparent misalignment need not reflect lost information as compressed sensing guarantees sparse features remain recoverable from the compressed activity. We confirm this by training supervised TopK sparse autoencoders that realize solvable compressed sensing by construction, finding alignment on recovered latents restored even when raw-activation alignment remains deflated. We extend the result to unsupervised SAEs trained without ground-truth latents, and to pretrained vision and language model SAEs, where SAE-latent alignment exceeds raw-activation alignment, consistent with superposition in real systems.
\end{abstract}


\section{Introduction}
\label{sec:intro}

\begin{figure}[tb]
    \centering
    \includegraphics[width=.8\textwidth]{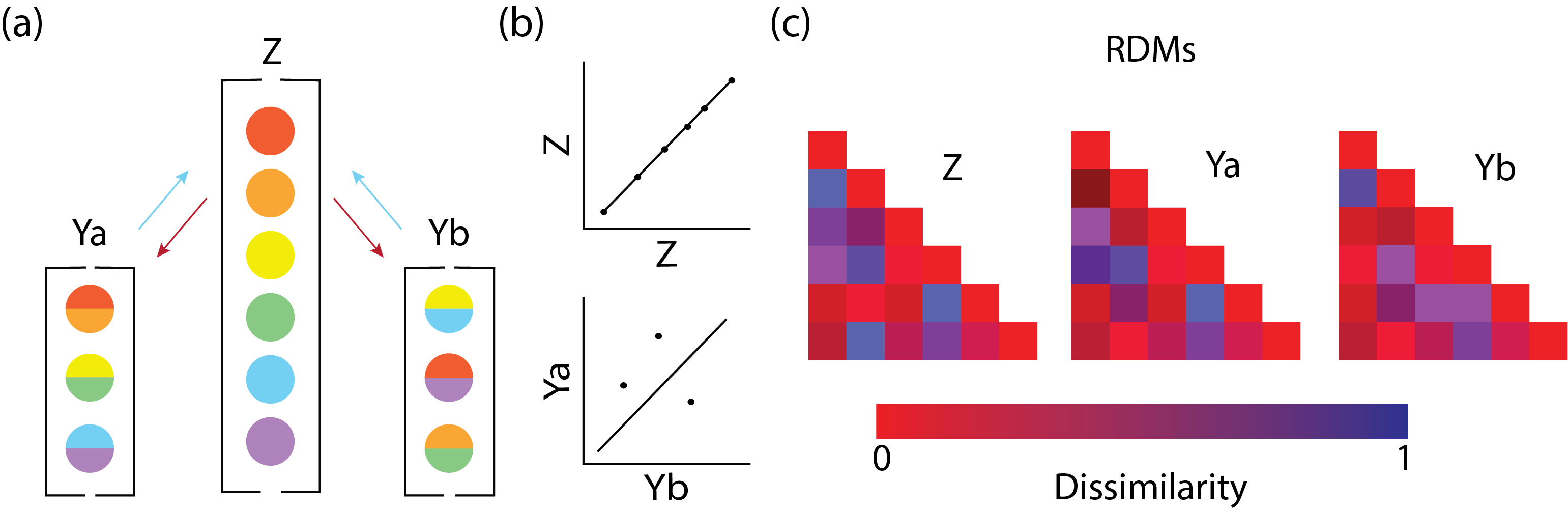}
    \caption{%
    \textbf{Illustration of core idea.}
    \textbf{a,} \textit{Superposition:} Two networks ($Y_a$, $Y_b$) share a common feature set ($Z$) but compress it differently (red arrows). Aligning the recovered latents (blue arrows) reveals identical feature content.
    \textbf{b,} \textit{Linear regression:} pairwise correlation between recovered latents reaches $1.0$, exceeding correlation between raw activations.
    \textbf{c,} \textit{Representational similarity analysis:} pairwise dissimilarity matrices of responses are correlated to produce an alignment score; as with regression, the score on recovered latents reaches $1.0$ and exceeds the score on raw activations.}
    \label{fig:intro}
\end{figure}

\paragraph{Superposition.}
\textit{Superposition} posits that neural networks linearly encode more features than they have neurons, distributing features across overlapping neural codes \citep{smolensky_tensor_1990, elhage2022toy, klindt2025superposition}. Formally, if $z \in \mathbb{R}^n$ denotes a vector of latent features and $y \in \mathbb{R}^m$ the neural response, a system in superposition computes $y = Az$ where $A \in \mathbb{R}^{m \times n}$ with $m < n$.
To illustrate, consider \hyperref[fig:intro]{Figure~\ref*{fig:intro}a}: a six-dimensional latent vector $z$ is linearly projected into three-dimensional neural responses for two systems, $y_a$ and $y_b$. Although both systems encode the same features, their projection matrices differ, producing distinct neural activity patterns.
Superposition typically gives rise to \textit{mixed selectivity}, or \textit{polysemanticity}, where individual neurons respond to multiple independent features \citep{rigotti2013importance, arora_linear_2018}, though some neurons may remain selective to a single feature, i.e., \textit{clean selectivity} or \textit{monosemanticity}.

Importantly, while this compression is lossy in general, compressed sensing theory guarantees that the original features can be perfectly recovered from the lower-dimensional neural activity, provided the features are sufficiently sparse and the projection satisfies certain conditions \citep{donoho_compressed_2006} (see Section~\ref{sec:cs_theory}). Thus, information is scrambled and distributed but not destroyed, meaning a downstream system can still access it \citep{adler2024complexity, hanni2024mathematical, garg2026many}.
In this work we ask: \textit{How does superposition affect standard neural alignment metrics?}

\paragraph{Representational alignment.}
A central goal in both neuroscience and machine learning is to quantitatively compare the internal representations of different neural systems \citep{kriegeskorte2008representational, kornblith2019similarity, sucholutsky2023getting}. Standard practice is to record neural responses from two systems to a shared set of stimuli, then apply an alignment metric to quantify their similarity. The most widely used metrics include Representational Similarity Analysis (RSA) \citep{kriegeskorte2008representational}, which correlates pairwise dissimilarity matrices; Centered Kernel Alignment (CKA) \citep{kornblith2019similarity}, which compares kernel matrices of neural responses; and linear regression, which measures how well one representation can be linearly predicted from the other. Nonlinear extensions of these metrics exist --- kernel CKA, topological RSA, and shape metrics with nonlinear feature maps --- and may speak to feature-content similarity in ways our analysis does not address; our scope is the predominant linear case. These metrics have yielded insights into shared structure between artificial and biological visual systems \citep{yamins2014performance, khaligh2014deep, cadena2019deep, khosla2021cortical, conwell2024large, prince2024contrastive} and between language models and the human language network \citep{schrimpf2021language}. 
Notably, standard linear alignment metrics are frequently applied to the raw neural activations (e.g., the distinct neural activity patterns $Y_a$ and $Y_b$ in Figure~\ref{fig:intro}). This means that they operate as composite measures of \textit{what} two systems represent ($Z$) and \textit{how} they arrange them across their neurons. Our argument is not that these metrics are flawed but that they answer a specific question --- linear similarity in activity coordinates --- which can differ substantially from similarity of the underlying features when systems share content but differ in encoding.

\paragraph{Alignment under superposition.}
We propose that superposition creates a critical distinction between two notions of alignment that are often combined. When two networks encode the same features but mix them differently across neurons (inevitable given different initializations or training runs), their activity coordinates will differ even though feature content is identical (Figure~\ref{fig:intro}, left). RSA and CKA operate on pairwise similarities of neural responses; the more features are packed into fewer neurons, the more these similarities are distorted, and the lower measured alignment becomes --- consistent with the empirical finding that smaller models achieve lower alignment \citep{elmoznino2024high}. This reduced alignment does not reflect a genuine loss of information: under compressed sensing the original features remain recoverable from the compressed activity \citep{donoho_compressed_2006}, and disentangling superposition can recover the alignment that raw activations obscure \citep{klindt2023interpretable, longon2025superposition}. Sparse autoencoders (SAEs) provide a practical tool for recovering this latent structure \citep{cunningham2023sparse, bricken2023towards}.

\paragraph{Contributions.}
\textbf{1) Analytic theory.} We derive closed-form expressions showing how superposition deflates feature-based alignment when applying RSA, CKA with linear kernel, and linear regression to the raw neural activations (Section~\ref{sec:theory}). In each case, alignment depends on the Gram matrices $G = A\T A$ of the projection matrices rather than on the latent features themselves, revealing the precise mechanism by which superposition obscures feature alignment. \\
\textbf{2) Simulation studies.} We test our theory in settings of increasing complexity. First, we use random Gaussian projections to generate different neural systems, testing their alignment as a function of compression. We then replace random Gaussian projections with TopK sparse autoencoders, training them end-to-end to reconstruct latents (Section~\ref{sec:sup_sae}), realizing solvable compressed sensing by construction and validating the theory with architectures capable of recovering latents (Fig.~\ref{fig:unsup_sae}). Finally, we demonstrate that TopK SAEs trained on neural activity recover latent features and restore feature alignment even without access to ground truth (Section~\ref{sec:unsup_sae}, Fig.~\ref{fig:unsup_sae}). \\
\textbf{3) Empirical validation on pretrained models.} By training SAEs on pretrained vision encoders (DINO, CLIP), we show that SAE-latent linear alignment between models differs systematically from raw neural activation alignment, in the direction predicted by our theory (Section~\ref{sec:real_sae}, Fig.~\ref{fig:real_sae}).

\section{Theory}
\label{sec:theory}

Let $z \in \mathbb{R}^n$ be a vector of \textit{latent variables} and $y \in \mathbb{R}^m$ a vector of neural \textit{representations}.

\begin{definition}[\textbf{Superposition}]
    A neural representation $y$ is in \textit{superposition} if it is
    \begin{enumerate}
        \item a \textit{linear} function of the latent variables $y = Az$, and
        \item a \textit{low-dimensional projection}, i.e., $A \in \mathbb{R}^{m \times n}$ with $m < n$.
    \end{enumerate}
\end{definition}

\paragraph{Assumptions.}
Throughout, we make two assumptions.
First, the neural representations are in superposition, described by a matrix $A \in \mathbb{R}^{m \times n}$: \(y = A z\).
Second, \textit{white covariance}: for a dataset of $d$ inputs, the latent vectors $z_1, \dots, z_d$ are treated as i.i.d.\ random variables with zero mean, $\mathbb{E}[z_i] = \mathbf{0}$, and identity covariance, $\mathbb{E}[z_i z_i\T] = I_n$.
The condition $m < n$ implies that the columns of $A$ cannot all be orthogonal, so features necessarily interfere with one another in the neural code.

\subsection{Compressed Sensing}\label{sec:cs_theory}
The projection from a high-dimensional latent space to a lower-dimensional neural space is generally lossy. However, compressed sensing guarantees that the original latent features can be perfectly recovered, provided two conditions are met \citep{donoho_compressed_2006, candes2006stable}:
\textit{Sparsity:} the latent variables are sparse, $\|z\|_0 \leq k$ for some $k \ll n$.
\textit{Restricted Isometry Property (RIP):} the projection $A$ approximately preserves distances between sparse vectors. Random Gaussian matrices satisfy RIP with high probability provided $m = \mathcal{O}(k \ln(n/k))$.
If these conditions are not fully satisfied, we incur an irreducible reconstruction error when recovering the latent features. This error lowers the ceiling of representational alignment, correctly reflecting the fact that if two features are separable in one system but cannot be separated in another system, this should count as a genuine misalignment.

\subsection{Representational Similarity Matrix (RSM)}
For a dataset of neural responses $Y = (y_1, \dots, y_d)$, the \textit{representational similarity matrix} is
\begin{equation}
    M(Y)_{ij} = \langle y_i, y_j \rangle, \qquad \forall i,j \in \{1, \dots, d\}.
\end{equation}
Under the linearity assumption,
\begin{equation}
    M(Y)_{ij} = \langle A z_i, A z_j \rangle = z_i\T A\T A z_j = z_i\T G z_j,
\end{equation}
where $G := A\T A \in \mathbb{R}^{n \times n}$ is the Gram matrix of the projection. The RSM thus measures similarity between latent variables under the inner product $\langle z_i, z_j \rangle_G := z_i\T G z_j$ induced by the neural code rather than the standard Euclidean inner product.

\section{Alignment Under Superposition}
\label{sec:alignment}

Consider two neural representations in superposition, with projection matrices $A_a \in \mathbb{R}^{m_a \times n}$, $A_b \in \mathbb{R}^{m_b \times n}$, generating responses $Y_a = A_a Z \in \mathbb{R}^{m_a \times d}$ and $Y_b = A_b Z \in \mathbb{R}^{m_b \times d}$ to the same latent dataset $Z = (z_1, \dots, z_d)$. This setup captures, for example, two differently initialized networks trained on the same task: they share the same underlying features but may produce distinct neural responses due to different projection matrices. We now analyze how standard alignment metrics behave in this scenario. The key insight is that any linear alignment metric applied to $Y_a$ and $Y_b$ will also reflect differences between $A_a$ and $A_b$, even when, from the perspective of compressed sensing, the two systems encode the same latent variables \(Z\).

\subsection{Representational Similarity Analysis (RSA)}
The RSA metric is the Pearson correlation between the vectorized upper-triangular elements of two RSMs.\footnote{Some implementations use Spearman rank correlation; since Spearman is Pearson on ranked vectors, the asymptotic scaling we derive holds qualitatively for both.}
Denoting these vectors $r_a, r_b \in \mathbb{R}^{d(d-1)/2}$,
\begin{equation}
    \rho(Y_a, Y_b) = \frac{\text{Cov}(r_a, r_b)}{\sqrt{\text{Var}(r_a)\text{Var}(r_b)}}.
\end{equation}

\begin{theorem}[Asymptotic RSA Alignment]\label{thm:rsa-correlation}
The RSA correlation between two representations $Y_a$ and $Y_b$ in superposition is the cosine similarity between their respective Gram matrices, $G_a = A_a\T A_a$ and $G_b = A_b\T A_b$:
\begin{equation}
    \rho(Y_a, Y_b) \approx \frac{\operatorname{Tr}(G_a G_b)}{\sqrt{\operatorname{Tr}(G_a^2) \operatorname{Tr}(G_b^2)}} = \frac{\langle G_a, G_b \rangle_F}{\|G_a\|_F \|G_b\|_F}.
\end{equation}
\end{theorem}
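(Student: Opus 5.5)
The plan is to read the empirical Pearson correlation over off-diagonal pairs $(i,j)$, $i<j$, as an estimate of a population correlation. For a pair of independent latents $z_i, z_j$ we compute moments of the random variables $r_a = z_i\T G_a z_j$ and $r_b = z_i\T G_b z_j$. As $d\to\infty$, the $d(d-1)/2$ pairs are identically distributed. They are only weakly dependent, since each entry shares latents with $O(d)$ of the $O(d^2)$ entries. A law of large numbers for U-statistics (or a direct variance computation of the empirical moments) then shows that the empirical covariance and variances converge to their population counterparts. By the continuous mapping theorem, $\rho(Y_a,Y_b)$ converges to $\operatorname{Cov}(r_a,r_b)/\sqrt{\operatorname{Var}(r_a)\operatorname{Var}(r_b)}$. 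This is the sense in which "$\approx$" is meant.

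The population moments follow from the white covariance assumption and independence of $z_i$ and $z_j$ for $i\neq j$. The means vanish: $\mathbb{E}[z_i\T G z_j] = \operatorname{Tr}(G\,\mathbb{E}[z_j]\mathbb{E}[z_i]\T) = 0$. For the cross moment, write $z_i\T G_a z_j \, z_i\T G_b z_j = z_i\T G_a z_j z_j\T G_b\T z_i$. Condition on $z_j$ and take the expectation over $z_i$, which gives $\operatorname{Tr}(G_a z_j z_j\T G_b)$. Taking the expectation over $z_j$ then gives $\operatorname{Tr}(G_a G_b)$, using $G_b\T = G_b$. The same computation with $G_a=G_b$ yields $\operatorname{Var}(r_a)=\operatorname{Tr}(G_a^2)$ and $\operatorname{Var}(r_b)=\operatorname{Tr}(G_b^2)$. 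Substituting these gives the claimed ratio. Since the Gram matrices are symmetric, $\operatorname{Tr}(G_aG_b)=\langle G_a,G_b\rangle_F$ and $\operatorname{Tr}(G^2)=\|G\|_F^2$, which is the Frobenius cosine form.

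Only second moments of $z$ enter these computations, because each factor involves $z_i$ and $z_j$ linearly and the two are independent. Fourth moments never appear, so no Gaussianity is needed. This is why the diagonal of the RSM is excluded: diagonal entries $z_i\T G z_i$ would bring in kurtosis terms. It is also why the upper-triangular convention matters.

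The main obstacle is the concentration step, since the $r$-entries are not independent: $r_{ij}$ and $r_{ik}$ share $z_i$. I would first bound the variance of the empirical covariance $\frac{2}{d(d-1)}\sum_{i<j} r_{a,ij} r_{b,ij}$. Only pairs of index pairs sharing an index contribute nonzero covariance. There are $O(d^3)$ such pairs against a normalization of $d^{-4}$, so the variance is $O(1/d)$, provided fourth moments of $z$ are finite. That assumption is mild and I would state it explicitly. The same bound handles the empirical means, which enter the Pearson centering, and the variances. Chebyshev then gives convergence in probability, completing the argument.
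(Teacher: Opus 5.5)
Your proposal is correct and follows the paper's argument: the off-diagonal means vanish, and conditioning on one latent and using whiteness collapses the cross-moment to $\operatorname{Tr}(G_aG_b)$ (the variances being the case $G_a=G_b$), which is the paper's substitution $\frac{1}{d}\sum_j z_j z_j\T \approx I_n$ written at the population level. What you add is an actual justification of the ``$\approx$'', namely an $O(1/d)$ U-statistic variance bound under finite fourth moments plus continuous mapping, which the paper only asserts; however, your claim that excluding the diagonal is essential is only a finite-$d$ point, because for fixed $A_a, A_b$ the $d$ diagonal entries are an $O(1/d)$ fraction of the entries, and the paper's own computation in fact sums over all $(i,j)$.
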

\textit{Proof in Appendix~\ref{app:proof_rsa}.} RSA alignment depends entirely on the Gram matrices of the projection, not on the latent features themselves. Two systems encoding identical features will show lower alignment scores whenever their projections induce different geometries in neural space.

\paragraph{Why alignment decreases with compression}
\label{para:why_decrease}
Theorem~\ref{thm:rsa-correlation} expresses alignment as a Gram-matrix similarity but does not on its own predict a value. Under \emph{independent} random projections --- the case where two systems share feature content but encode with unrelated geometries --- we obtain a quantitative prediction. (Networks sharing inductive biases may have correlated projections and lie outside this analysis.) We compute the expected numerator $\mathbb{E}[\operatorname{Tr}(G_a G_b)]$ when $A_a$ and $A_b$ are drawn independently with i.i.d.\ entries of mean zero\footnote{An arbitrary mean makes the expression more complicated but yields the same scaling with $m$.} and variance $\sigma^2$. By the cyclic property of the trace, $\operatorname{Tr}(G_a G_b) = \|A_b A_a\T\|_F^2$. Writing $C = A_b A_a\T \in \mathbb{R}^{m \times m}$, independence gives $\mathbb{E}[C_{\ell i}^2] = n\sigma^4$, and summing over all $m^2$ entries yields $\mathbb{E}[\operatorname{Tr}(G_a G_b)] = m^2 n \sigma^4$. For the denominator,
\[
\mathbb{E}[\operatorname{Tr}(G^2)] = mn\bigl[(m+n-2)\sigma^4 + \mu_4\bigr],
\]
where $\mu_4 = \mathbb{E}[A_{ij}^4]$. The ratio of expectations approximates the expected alignment:
\[
\frac{\mathbb{E}[\operatorname{Tr}(G_a G_b)]}{\mathbb{E}[\operatorname{Tr}(G^2)]} = \frac{m\,\sigma^4}{(m+n-2)\sigma^4 + \mu_4} \;\approx\; \frac{m}{n} \quad \text{for } m \ll n,
\]
where the approximation holds for any distribution with finite 4th moment. Alignment vanishes as $m/n \to 0$: 2 random $m$-dimensional subspaces have less room to overlap as $m$ shrinks relative to $n$.

\subsection{Centered Kernel Alignment (CKA)}
The CKA metric with linear kernel $K = Y\T Y$ satisfies (Appendix~\ref{app:proof_cka_lin}):
\begin{theorem}[Asymptotic Linear CKA Alignment]\label{thm:cka_lin}
The CKA alignment with linear kernel between $Y_a$ and $Y_b$ in superposition is the cosine similarity between their Gram matrices:
\begin{equation}
    \text{CKA}_{\mathrm{Lin}}(Y_a, Y_b) \approx \frac{\operatorname{Tr}(G_a G_b)}{\sqrt{\operatorname{Tr}(G_a^2) \operatorname{Tr}(G_b^2)}},
\end{equation}
equivalent to the asymptotic RSA result (Theorem~\ref{thm:rsa-correlation}).
\end{theorem}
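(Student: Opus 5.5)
The plan is to write linear CKA in its kernel form and substitute $Y = AZ$. With the centering matrix $H = I_d - \frac{1}{d}\mathbf{1}\mathbf{1}\T$ and centered kernels $\tilde K_a = H K_a H$, $\tilde K_b = H K_b H$, where $K = Y\T Y$,
\begin{equation*}
\text{CKA}_{\mathrm{Lin}}(Y_a,Y_b) = \frac{\operatorname{Tr}(\tilde K_a \tilde K_b)}{\sqrt{\operatorname{Tr}(\tilde K_a^2)\operatorname{Tr}(\tilde K_b^2)}}.
\end{equation*}
Write $\tilde Z = ZH$ for the centered latents. Since $HY\T = H Z\T A\T$, we get $\tilde K_a = \tilde Z\T G_a \tilde Z$.

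The first step uses the cyclic property of the trace:
\begin{equation*}
\operatorname{Tr}(\tilde K_a\tilde K_b) = \operatorname{Tr}(\tilde Z\T G_a \tilde Z\tilde Z\T G_b \tilde Z) = \operatorname{Tr}(G_a S G_b S), \qquad S := \tilde Z \tilde Z\T \in \mathbb{R}^{n\times n}.
\end{equation*}
The same computation gives $\operatorname{Tr}(\tilde K_a^2) = \operatorname{Tr}(G_a S G_a S)$, and likewise for $b$. This reduction is exact. It shows that everything depends on the data only through the centered scatter matrix $S$. This is the $n\times n$ dual of the $d\times d$ kernel, and it is the point of working with linear CKA.

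The second step brings in the white-covariance assumption. Since the $z_i$ are i.i.d. with zero mean and identity covariance, the law of large numbers gives $S/d \to I_n$ almost surely as $d\to\infty$ with $n$ fixed; centering costs only an $O(1/d)$ correction. The common factor $d^2$ cancels between numerator and denominator. By continuity of the trace, each trace converges, e.g.\ $\operatorname{Tr}(G_a (S/d) G_b (S/d)) \to \operatorname{Tr}(G_a G_b)$. The denominator has $\operatorname{Tr}(G_a^2) = \|G_a\|_F^2 > 0$ whenever $A_a \neq 0$, so the ratio is continuous at the limit and converges to the claimed expression. This is the sense of ``$\approx$'' in the statement. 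If the paper's RSA appendix already establishes the RSM's asymptotic second-moment structure, I would instead cite Theorem~\ref{thm:rsa-correlation} for equivalence. However, the direct route above is cleaner and needs no fourth-moment assumptions beyond what the LLN requires.

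The main obstacle is making ``$\approx$'' honest, i.e.\ quantifying the finite-$d$ error. I would bound $\|S/d - I_n\|_{op} = O_p(\sqrt{n/d})$ under a finite fourth moment or a sub-Gaussian assumption. I would then write
\begin{equation*}
\operatorname{Tr}(G_a E G_b E) - \operatorname{Tr}(G_a G_b)
\end{equation*}
with $E = S/d$, expand $E = I + \Delta$, and bound the linear and quadratic terms in $\Delta$ via $|\operatorname{Tr}(XY)| \le \|X\|_F\|Y\|_F$ and $\|X\Delta\|_F \le \|X\|_F\|\Delta\|_{op}$. This yields a relative error of order $\sqrt{n/d}$, so the approximation is accurate in the regime $d \gg n$.
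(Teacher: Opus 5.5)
Your proposal is correct and follows the paper's argument: the paper also substitutes $Y=AZ$, uses the cyclic property to reduce the HSIC terms to $\operatorname{Tr}(ZZ\T G_a ZZ\T G_b)$, and then invokes $ZZ\T \approx d I_n$. The differences are that the paper drops centering up front via $YH_d \approx Y$, whereas you keep it exactly inside $S=\tilde Z\tilde Z\T$, and that you justify the limit of the ratio by continuity using $\|G_a\|_F>0$. One caveat on your optional rate: the bound $|\operatorname{Tr}(G_a\Delta G_b)|\le \|G_a\|_F\|G_b\|_F\|\Delta\|_{\mathrm{op}}$ gives an additive error of order $\|\Delta\|_{\mathrm{op}}$ in the normalized CKA value, not an error relative to $\operatorname{Tr}(G_aG_b)$, which matters because that limit is only about $m/n$ for independent random projections.
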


\subsection{Linear Regression}
Predicting $Y_b$ from $Y_a$ via $\hat{Y}_b = W Y_a + E$ yields, in the asymptotic limit (Appendix~\ref{app:proof_linreg}):

\begin{theorem}[Asymptotic Linear Regression]\label{thm:ols}
\hspace{-0.5em}
\begin{enumerate}
\item \textbf{Optimal weights:} $\hat{W} \approx A_b A_a\T (A_a A_a\T)^{-1}$.
\item \textbf{MSE:} $\text{MSE}(Y_b | Y_a) \approx \frac{1}{m_b} \| A_b - \hat{W} A_a \|_F^2$.
\item \textbf{Explained variance:} $R^2 = 1 - \frac{\operatorname{Tr}((A_b - \hat{W} A_a)\T (A_b - \hat{W} A_a))}{\operatorname{Tr}(A_b\T A_b)}$.
\item \textbf{Pearson correlation:} $\rho(\hat Y_b, Y_b)_{ij} = \frac{(\hat W A_a A_b\T)_{ij}}{\sqrt{(\hat W A_a A_b\T)_{ii} (A_b A_b\T)_{jj}}}$.
\end{enumerate}
\end{theorem}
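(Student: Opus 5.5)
We prove Theorem~\ref{thm:ols} by writing every empirical second-moment matrix in terms of the latent covariance and then replacing that covariance by its limit. Let $\hat\Sigma_Z = \frac{1}{d} Z Z\T$. By the white-covariance assumption and the law of large numbers, $\hat\Sigma_Z \to I_n$ almost surely as $d \to \infty$. Every sample quantity we need factors through $\hat\Sigma_Z$:
\[
\tfrac1d Y_a Y_a\T = A_a \hat\Sigma_Z A_a\T, \qquad \tfrac1d Y_b Y_a\T = A_b \hat\Sigma_Z A_a\T, \qquad \tfrac1d Y_b Y_b\T = A_b \hat\Sigma_Z A_b\T.
\]
The symbol ``$\approx$'' in the statement will mean equality in this limit.

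\textbf{Weights.} The least-squares problem $\min_W \|Y_b - W Y_a\|_F^2$ has normal equations $W Y_a Y_a\T = Y_b Y_a\T$. We assume $A_a$ has full row rank $m_a \le n$. Then $A_a \hat\Sigma_Z A_a\T$ is invertible for large $d$, and
\[
\hat W = A_b \hat\Sigma_Z A_a\T \bigl(A_a \hat\Sigma_Z A_a\T\bigr)^{-1}.
\]
Matrix inversion is continuous at invertible matrices, so by the continuous mapping theorem $\hat W \to A_b A_a\T (A_a A_a\T)^{-1}$. This is item~1.

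\textbf{MSE and $R^2$.} The residual is $Y_b - \hat W Y_a = (A_b - \hat W A_a) Z$. Set $R = A_b - \hat W A_a$. The per-sample, per-unit squared error is
\[
\frac{1}{m_b d}\,\|RZ\|_F^2 = \frac{1}{m_b}\operatorname{Tr}\bigl(R \hat\Sigma_Z R\T\bigr) \;\longrightarrow\; \frac{1}{m_b}\|R\|_F^2,
\]
which gives item~2. For item~3, the latents have zero mean, so the total variance of $Y_b$ is $\frac1d \operatorname{Tr}(Y_b Y_b\T) \to \operatorname{Tr}(A_b\T A_b)$. Taking the ratio of residual to total variance gives the stated $R^2$.

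\textbf{Correlations.} We compute the population covariances of $\hat Y_b = \hat W Y_a$ and $Y_b$:
\[
\operatorname{Cov}(\hat Y_b, Y_b) = \hat W A_a A_b\T, \qquad \operatorname{Var}(Y_b) = A_b A_b\T, \qquad \operatorname{Var}(\hat Y_b) = \hat W A_a A_a\T \hat W\T.
\]
Substituting item~1 gives $\hat W A_a A_a\T \hat W\T = A_b A_a\T (A_a A_a\T)^{-1} A_a A_b\T = \hat W A_a A_b\T$. So the variance of $\hat Y_b$ coincides with its cross-covariance with $Y_b$; this is the usual projection identity of OLS. Dividing the cross-covariance by the square roots of the variances yields item~4. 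The statement writes this for general indices $(i,j)$; the meaningful case is the per-unit correlation, $i=j$.

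\textbf{Main obstacle.} The algebra is routine, so the care lies in the limit. First, $\hat\Sigma_Z \to I_n$ must hold with $n$ and $m$ fixed while $d \to \infty$. Second, $A_a A_a\T$ must be invertible, which requires full row rank of $A_a$. We would state both as hypotheses and then invoke the continuous mapping theorem for every item. If $A_a$ is rank-deficient, we would replace the inverse by the pseudo-inverse, and the same argument goes through.
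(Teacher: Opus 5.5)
Your proposal is correct and follows the paper's proof essentially step by step: write $Y_a = A_a Z$ and $Y_b = A_b Z$, replace $\frac{1}{d}ZZ\T$ by $I_n$, and read off the OLS weights, the residual $(A_b - \hat W A_a)Z$, the $R^2$ ratio, and the entrywise covariances. Your additions make the argument more rigorous without changing the route: the full-row-rank hypothesis on $A_a$, the continuous-mapping justification for the limit, and the explicit projection identity $\hat W A_a A_a\T \hat W\T = \hat W A_a A_b\T$, which the paper uses without comment when it writes $\mathrm{Var}(\hat Y_b^i) \approx (\hat W A_a A_b\T)_{ii}$.
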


\subsection{Feature Overlap}
\label{sec:feature_overlap}
So far we have considered the idealized case in which two systems encode exactly the same set of features. We now turn to the more realistic scenario: two systems sharing only a subset of their features. Let $l_a$, $l_b$ denote the number of features in systems $A$ and $B$, and $l_{ab}$ the number shared. The full latent space is $n = l_a + l_b - l_{ab}$. Ordering features so that the first $l_a$ dimensions of $z$ belong to $A$, the last $l_b$ to $B$, and the overlap occupies positions $(l_a - l_{ab} + 1, \dots, l_a)$, the projection matrices have block structure making $G_a, G_b$ block-diagonal with supports on the $A$- and $B$-blocks respectively. Substituting into the RSA/CKA formula:
\begin{equation*}
  \frac{\operatorname{Tr}(G_a G_b)}{\sqrt{\operatorname{Tr}(G_a^2)\,\operatorname{Tr}(G_b^2)}} = \frac{\sum_{i=l_a - l_{ab}+1}^{l_a} (G_a G_b)_{ii}}{\sqrt{\|\tilde G_a\|_F^2 \|\tilde G_b\|_F^2}}.
\end{equation*}
This expression depends on both feature overlap (non-zero terms in the numerator) and the degree of superposition compression (which shapes the Gram matrices). The interplay is not straightforward to disentangle analytically; we investigate it through simulation below.
\begin{figure}[t]
    \centering
    \includegraphics[width=0.99\textwidth]{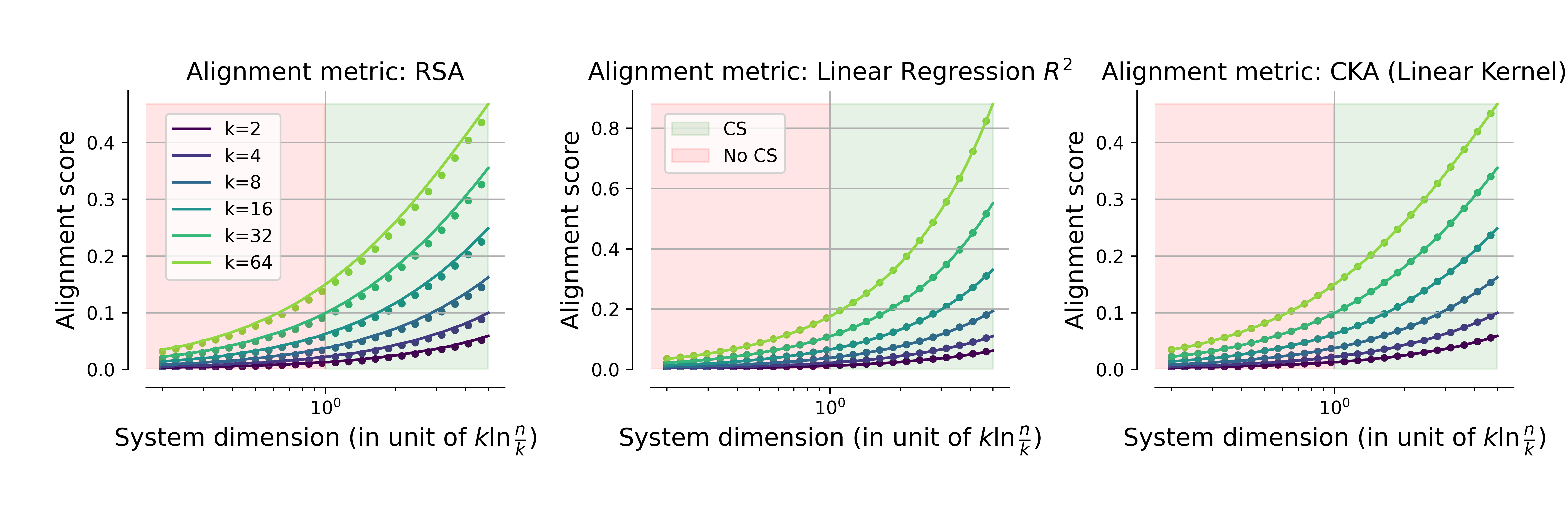}
    \vspace{-20pt}
    \caption{\textbf{Gaussian-projection validation (full overlap).} Alignment measured by RSA (left), linear-regression $R^2$ (middle), and linear CKA (right), as a function of system dimension $m$ in units of $k\ln(n/k)$, across multiple sparsity levels $k$. Analytical predictions (solid curves) match simulation (dots). The No-CS regime (red shading, $m < k\ln(n/k)$) is where exact latent recovery is not guaranteed; the CS regime (green shading) is where it is.}
    \label{fig:gauss_full}
\end{figure}

\section{Superposition in Random Gaussian Projections}
\label{sec:gau_sim}

Inspired by standard results in compressed sensing \cite{donoho_compressed_2006}, we validate our theory by simulating superposition through randomly sampling Gaussian projection matrices from a shared set of pre-determined latents $Z \in \mathbb{R}^{n \times d}$.  We simulate two scenarios: the first where two systems share completely overlapping set of features, and the second where two systems share partially overlapping sets of features. 

\subsection{Full feature overlap}
\label{subsec: gau_full_overlap}
\paragraph{Setup.}
For two systems A and B, their respective neural responses $Y_a, Y_b \in \mathbb{R}^{m \times d}$ are obtained by projecting the latents $Z$ through matrices $A_a, A_b \in \mathbb{R}^{m \times n}$, the entries of which are drawn from a standard normal distribution: 
\begin{equation}
    Y_a = A_a Z \,\,, \,\, Y_b = A_b Z \,, \qquad  A_{ij} \sim \mathcal{N}(0,1)
\end{equation}
Details in \ref{app:gaussian_sims}.
We simulate systems across a range of $m$ and sparsity $k$ and measure their alignment.

\paragraph{Results.}
Across all three metrics, closed-form predictions (solid curves) closely match simulation (dots; Figure~\ref{fig:gauss_full}). Alignment scales monotonically with $m$, consistent with the $m/(m+n-2)$ ratio of expectations derived in Section~\ref{para:why_decrease}, which reduces to the $m/n$ approximation for $m \ll n$. The three metrics track one another tightly, as expected from Theorems~\ref{thm:rsa-correlation} and~\ref{thm:cka_lin}, which give RSA and linear CKA the same asymptotic limit; linear regression $R^2$ behaves analogously. That two systems with identical feature content can register near-zero raw-activation alignment is the central empirical takeaway here --- deflation is predictable from encoding geometry alone, with no information lost.

\subsection{Partial feature overlap}
\label{subsec: gau_part_overlap}
\paragraph{Setup.}
We extend Section~\ref{subsec: gau_full_overlap} to two systems sharing only a fraction of their features. Let $l_a, l_b$ be the per-system feature counts and $l_{ab}$ the shared count; the full latent dimension is $n = l_a + l_b - l_{ab}$. The \textit{feature overlap ratio} $u \equiv l_{ab}/\sqrt{l_a l_b}$ ranges from $0$ (disjoint) to $1$ (full overlap, recovering Section~\ref{subsec: gau_full_overlap}). Setting $l_a = l_b = l$, projections $A_a, A_b$ are column-masked so each system encodes only its own features (Appendix~\ref{app:gaussian_sims}). We fix $m_a$ in the CS regime and sweep $m_b$ under different $u$, using CKA-Lin as a representative metric per Theorems~\ref{thm:rsa-correlation}--\ref{thm:cka_lin}.

\begin{figure}[t]
    \centering
    \includegraphics[width=1\textwidth]{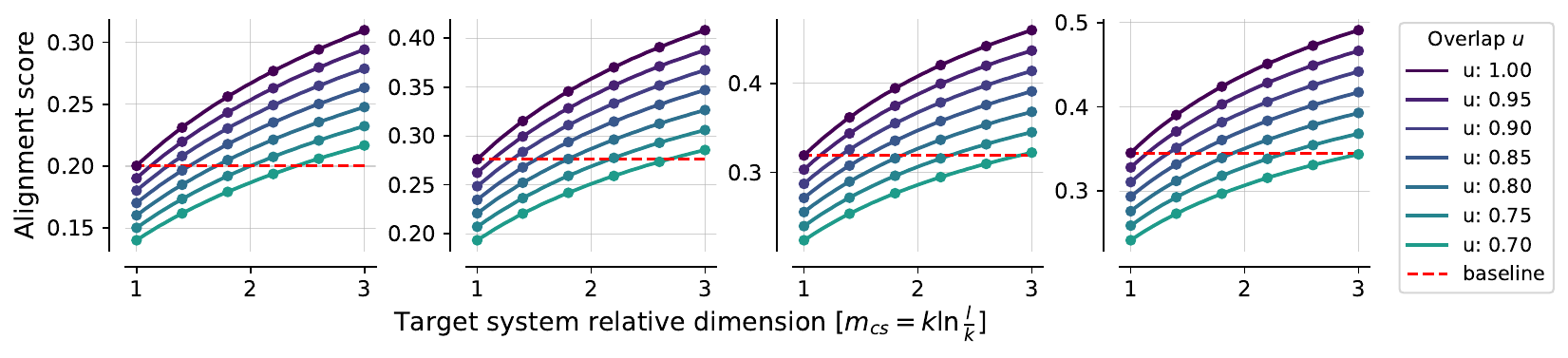}
    \vspace*{-0.7cm}
    \caption{\textbf{Gaussian-projection validation (partial feature overlap).} CKA (linear kernel) alignment as a function of target system dimension $m_b$ (in units of $m_{\text{cs}} = k\ln(l/k)$), with $m_a$ fixed at $3 m_{\text{cs}}$ in the CS regime. Each curve corresponds to a different feature overlap ratio $u$; the red dashed line marks the alignment of a fully overlapping pair ($u=1$) at $m_b = m_{\text{cs}}$. Curves above this line demonstrate \emph{rank inversion}: partial-overlap pairs ($u<1$) at lower compression score higher than fully overlapping pairs at heavier compression. Subplots left to right increasing sparsity fraction $k/l$.}
    \label{fig:gauss_partial_diff_dims}
\end{figure}

\paragraph{Results.}
Figure~\ref{fig:gauss_partial_diff_dims} reveals a striking \emph{rank inversion}. The red dashed line marks the alignment achieved by a fully overlapping pair ($u=1$) at $m_b = k\ln(l/k)$; curves above this line correspond to partial-overlap pairs ($u<1$) at less compressed $m_b$. That such curves exist means two systems sharing strictly less feature content can register \emph{higher} raw-activation alignment than two systems encoding identical features, purely because the latter pair is more heavily compressed. This is the cleanest illustration of the central thesis on simulated data: when alignment metrics conflate \emph{what} and \emph{how}, the ordering they induce need not reflect the ordering of feature overlap.

\section{Learnable Superposition via Supervised TopK SAEs}
\label{sec:sup_sae}

Our closed-form theory is derived for arbitrary projection matrices $A$; the Gaussian-random case (Appendix~\ref{app:gaussian_sims}) validates the theory but raises two concerns: (i) Gaussian projections are a worst case for alignment because $A_a$ and $A_b$ are statistically independent, and (ii) while compressed-sensing theory \emph{guarantees} that sparse latents are recoverable when $m \gtrsim k\ln(n/k)$, the Gaussian setup does not exhibit a \emph{practical} decoder that achieves this \citep{oneil2024compute}. We therefore introduce a learnable variant in which superposition is realized by a trained TopK sparse autoencoder.

\paragraph{Setup.}
A \textit{supervised TopK SAE} is a linear encoder--decoder pair trained end-to-end to reconstruct sparse latent features:
\begin{equation}
    \hat{z} = \text{TopK}_k(E \, D z), \qquad D \in \mathbb{R}^{m \times n},\ E \in \mathbb{R}^{n \times m},
\end{equation}
with reconstruction loss $\mathcal{L} = \mathbb{E}[\|z - \hat z\|_2^2]$ and gradients through TopK passed via the straight-through estimator. TopK is applied at the $n$-dimensional output to exploit the known sparsity $k$. The trained decoder $D$ plays the role of $A$ in Section~\ref{sec:alignment}: the activations $Y = D Z$ are the raw neural responses on which alignment is measured. Unlike Gaussian $A$, here $D$ is \emph{learned} such that a matching encoder $E$ exists --- compressed sensing is realized by construction. We train two such SAEs, $(D_a, E_a)$ and $(D_b, E_b)$, independently from different random seeds on identical latent data. Because the objective is reconstruction (not alignment), the two decoders converge to distinct projection geometries while both admit accurate recovery via their paired encoders --- exactly the regime of interest: identical feature content, different encodings. Full hyperparameters in Appendix~\ref{app:sup_sae_details}.

\paragraph{Measurements.}
For each trained pair and a sweep over $m$ we compute two alignment scores. \emph{Raw alignment} $\text{metric}(Y_a, Y_b)$ with $Y_a = D_a Z$, $Y_b = D_b Z$, which Section~\ref{sec:alignment} predicts to be deflated by superposition; and \emph{reconstructed-latent alignment} $\text{metric}(\hat Z_a, Y_b)$ with $\hat Z_a = \text{TopK}_k(E_a D_a Z)$, which we predict to approach $1.0$ in the CS regime since $\hat Z_a \to Z$ and $Y_b = D_b Z$ is a linear function of $Z$. We use linear regression $R^2$ as the alignment metric.

\paragraph{Results.}
We plot the supervised-SAE results as the green ceiling curve in Figure~\ref{fig:unsup_sae}, alongside the unsupervised setup of Section~\ref{sec:unsup_sae}. Raw-activation alignment (blue) decreases with compression, consistent with Section~\ref{sec:alignment}: two independently trained encoders converge to distinct Gram matrices despite encoding identical feature content. Reconstructed-latent alignment, by contrast, remains near $1.0$ throughout the CS regime and degrades only when $m$ falls below the recoverability threshold. This is the cleanest demonstration of the central claim: the apparent misalignment under superposition reflects encoding choice, not information content.

\section{Recovering Latents with Unsupervised SAEs}
\label{sec:unsup_sae}

\begin{figure}[t]
    \centering
    \includegraphics[width=1\textwidth]{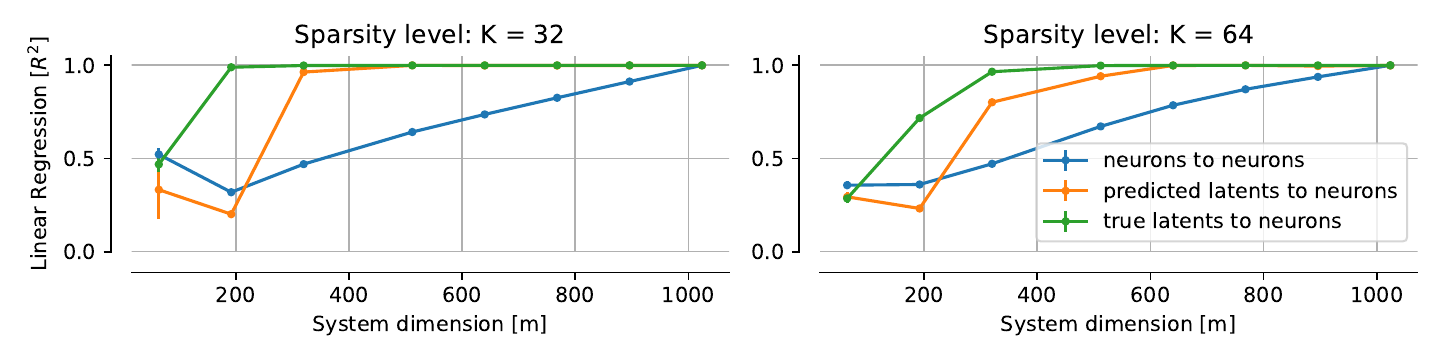}
    \caption{\textbf{Unsupervised SAEs partially recover the alignment deflated by superposition.} Without access to ground-truth latents, an unsupervised TopK SAE trained on neural activity $Y$ recovers latents whose pairwise alignment (orange) is substantially higher than the raw-activation baseline (blue), approaching the supervised-SAE ceiling (green) in the CS regime. The gap between green and orange quantifies the cost of unsupervised identification.}
    \label{fig:unsup_sae}
\end{figure}

The supervised setup of Section~\ref{sec:sup_sae} relies on ground-truth latents at training time, which are unavailable for real neural systems. We now ask the practical question: can an \textit{unsupervised} SAE, trained only on neural activity $Y$ without seeing $Z$, recover the latents well enough to restore alignment?

\paragraph{Setup.}
Given neural responses $Y \in \mathbb{R}^{m \times d}$ from a single system (Gaussian-projected or supervised-SAE-encoded), we train a TopK SAE
\begin{equation}
    \hat y = D_u \cdot \text{TopK}_k(E_u \, y), \qquad E_u \in \mathbb{R}^{n' \times m},\ D_u \in \mathbb{R}^{m \times n'},
\end{equation}
with $n' \geq n$ (we set $n' = n$ for direct comparison) and loss $\mathcal{L} = \mathbb{E}[\|y - \hat y\|^2]$. The sparse activations $\hat Z = \text{TopK}_k(E_u y) \in \mathbb{R}^{n'}$ are taken as recovered latents. For a pair of systems we train two SAEs independently on $Y_a$ and $Y_b$, yielding $\hat Z_a$ and $\hat Z_b$, and measure alignment via linear regression $R^2$ (which is invariant to the arbitrary index permutations induced by independent dictionary learning).

\paragraph{Identifiability caveat.}
This is non-trivial: SAEs trained on identical data can learn different feature decompositions \citep{paulo2025sparse, leask2025sparse}, and feature splitting and absorption complicate direct comparison \citep{chanin2024absorption, fel2025archetypal}. Our setup is well-specified by construction --- ground-truth $Z$ and projection $A$ are both known --- allowing us to separately quantify (a) latent recovery in the CS regime and (b) the fraction of the alignment deflation an unsupervised SAE actually undoes.

\paragraph{Results.}
Figure~\ref{fig:unsup_sae} compares three curves at fixed sparsity. The raw-activation baseline (blue) decreases with compression, reproducing Section~\ref{sec:gau_sim}. The supervised ceiling (green) remains near $1.0$ throughout the CS regime: with ground-truth latents, alignment is perfectly recovered. The unsupervised curve (orange) sits between the two, substantially above the raw baseline and approaching the supervised ceiling within the CS regime. The gap between green and orange quantifies the cost of identifying the latent basis from data alone; that this cost is small in the CS regime is the central practical finding of this section. Outside the CS regime all three curves degrade, consistent with compressed sensing no longer guaranteeing recovery.

\section{Empirical Validation on Pretrained Models}
\label{sec:real_sae}

We now turn to the setting of greatest practical interest: pretrained networks whose true latent structure is unknown. The question is whether the alignment gap our theory predicts --- higher alignment in SAE-latent space than in raw-activation space --- holds for real models.

\paragraph{Setup.}
We compare pairs of pretrained vision encoders (DINO and CLIP) at corresponding layers, using the SAEs documented in the ViT Prisma repository \citep{joseph2025prismaopensourcetoolkit}. Activations are extracted from residual-post and MLP-out sublayers, restricted to the CLS token, and evaluated on the ImageNet1K validation set \citep{ILSVRC15}. For each pair $(a, b)$ we train TopK SAEs jointly so that the two systems share a sparse latent space, with loss
\begin{equation}
    \mathcal{L} = \sum_{i,j \in \{a,b\}} \mathrm{MSE}(y_i, \hat{y}_{ij}), \qquad \hat{y}_{ij} = D_i \cdot \text{TopK}_k(E_j \, y_j),
\end{equation}
where $E_j$ encodes system $j$'s activations into the shared sparse code and $D_i$ decodes back into system $i$'s activation space. We use latent dimension $n = 2m$. Full details in Appendix~\ref{app:real_sae_details}.

\paragraph{Two-stage validation.}
The theory of Section~\ref{sec:alignment} is derived under strict superposition ($y = Az$ with $z$ sparse), an assumption real activations may violate. We therefore evaluate alignment in two settings. The \emph{raw} setting uses the actual hidden states $y$, which may or may not be in superposition. The \emph{reconstructed} setting uses the SAE reconstructions $\hat y$, which are in superposition by construction (a sparse latent linearly decoded into activation space). The theory must hold in the reconstructed setting; whether it also holds in the raw setting is itself an empirical test of how well the superposition assumption describes real activations.

\paragraph{Measurements.}
For each system pair we compute \emph{raw alignment} $\mathrm{metric}(Y_a, Y_b)$ and \emph{latent alignment} $\mathrm{metric}(Z_a, Y_b)$, where $Z_a = \text{TopK}_k(E_a Y_a)$ is system $a$'s sparse code. As alignment metric we use Pearson $r$ between OLS predictions and targets (i.e., linear-regression alignment).

\begin{figure}[t]
    \centering
    \includegraphics[width=0.7\textwidth]{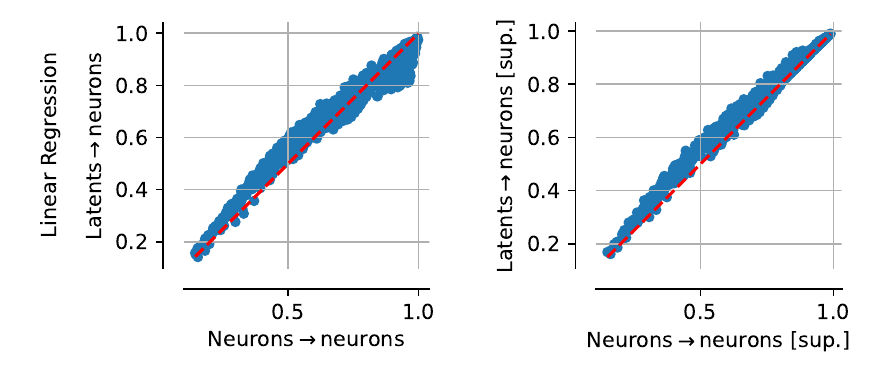}
    \vspace{-10pt}
    \caption{\textbf{SAEs are more aligned on pretrained vision models.} Each scatter point corresponds to one (model pair, layer, sublayer) triple. X-axis: linear-regression alignment between two systems' raw activations. Y-axis: linear-regression alignment from one system's SAE latents to the other system's activations. \textbf{Left:} real activations as targets. \textbf{Right:} SAE-reconstructed activations as targets, which are in superposition by construction (positive control for the theory). Points above the red dashed diagonal indicate higher latent-to-raw than raw-to-raw alignment; their predominance in both panels is consistent with superposition operating in pretrained models.}
    \vspace{-10pt}
    \label{fig:real_sae}
\end{figure}

\paragraph{Results.} 
For the majority of system pairs and layers tested, latent alignment exceeds raw alignment: scatter points lie predominantly on or above the diagonal of equality (Figure~\ref{fig:real_sae}). The right panel serves as a positive control --- the reconstructed activations are in superposition by construction, so the theory must apply, and the pattern confirms it does. The left panel shows the same qualitative effect on raw activations, consistent with real activations being well-described by the superposition model. Across diverse architectures and layers, this implies that pretrained networks arrange features idiosyncratically across their neurons in ways that depress raw-activation alignment, and that SAE disentanglement partially restores feature-level alignment.

\section{Limitations}

Our analysis rests on two assumptions, both of which are testable rather than provable in general. First, neural systems operate in superposition. This is motivated by growing evidence of mixed selectivity in artificial \citep{elhage2022toy, bricken2023towards, klindt2023interpretable} and biological \citep{rigotti2013importance, klindt2023interpretable} systems, and is consistent with the experiments in Section~\ref{sec:real_sae}. Second, two different systems are likely to have dissimilar superposition projections; without this, raw-activation alignment would not be deflated. Visual neuroscience offers a partial counterargument: feature tuning is strikingly conserved across individuals and species, from V1 complex cells \citep{hubel1962receptive, hubel1968receptive} to face-selective cells \citep{quiroga2005invariant, kanwisher1997fusiform}, suggesting that some prominent features are reliably assigned dedicated neurons \citep{elhage2022toy, cadieu2014deep}.

The theoretical results further assume whitened i.i.d.\ latents. Real features co-occur, inducing correlations between projection matrices that our analysis does not capture; we expect qualitative conclusions to survive, but quantifying the effect of structured latent statistics is future work. Finally, the empirical evaluation is restricted to artificial vision encoders. Whether the predicted alignment gap transfers to biological recordings depends both on real activity being well-described by sparse linear superposition and on the quality of SAEs available for that data --- both open.

\section{Discussion}
\label{sec:discussion}

We have shown, both analytically and empirically, that superposition systematically lowers linear alignment scores measured on raw neural activations. The mechanism is direct: RSA, linear CKA, and linear regression applied to activations all depend on the Gram matrices of the compression, not on the latent features themselves. Random Gaussian projections (Appendix~\ref{app:gaussian_sims}) confirm the prediction; supervised TopK SAEs extend it to a setting where compressed sensing is realized by construction; unsupervised SAEs recover most of the deflated alignment without ground truth; and SAEs on pretrained models suggest the effect is present in real systems.

These results do not imply that linear alignment metrics are flawed. They answer a specific question --- linear similarity in activity coordinates --- which is a different question from similarity of the underlying features. By isolating the latter, we find that the answers can come apart, and that the gap is large enough to flip the relative ordering of system pairs (Section~\ref{subsec: gau_part_overlap}).

This observation also offers an interpretation of the empirical finding that alignment increases with model size \citep{huh2024platonic, kornblith2019similarity}: larger models represent features with less compression, reducing the distortion superposition introduces into pairwise similarity structure. Gr\"oger et al. \citep{groger2026revisitplatonic} have questioned the robustness of this trend for global metrics and found local neighborhood structure better preserved --- consistent with the geometry of linear projections, which distort global distances while preserving local neighborhoods. A formal analysis of superposition effects on local alignment metrics is an important direction for future work.

\paragraph{Looking forward.}
The prescription is concrete: to understand the extent of feature overlap between neural networks, alignment should be measured in latent feature space. Practical implementations include sparse autoencoders \citep{cunningham2023sparse, bricken2023towards}, dictionary learning \citep{olshausen1997sparse}, and identifiable representation learning \citep{reizinger2024cross, reizinger2025empirically}; developing and validating such pipelines is a critical next step.

\section{Acknowledgments}

We acknowledge Cold Spring Harbor HPC GPU cluster supported by grant S10OD028632-01.

\bibliography{references}

@article{sucholutsky2023getting,
  title={Getting aligned on representational alignment},
  author={Sucholutsky, Ilia and Muttenthaler, Lukas and Weller, Adrian and Peng, Andi and Bobu, Andreea and Kim, Been and Love, Bradley C and Grant, Erin and Groen, Iris and Achterberg, Jascha and others},
  journal={arXiv preprint arXiv:2310.13018},
  year={2023}
}

@article{kriegeskorte2008representational,
  title={Representational similarity analysis-connecting the branches of systems neuroscience},
  author={Kriegeskorte, Nikolaus and Mur, Marieke and Bandettini, Peter A},
  journal={Frontiers in systems neuroscience},
  volume={2},
  pages={249},
  year={2008},
  publisher={Frontiers}
}

@article{elhage2022toy,
  title={Toy models of superposition},
  author={Elhage, Nelson and Hume, Tristan and Olsson, Catherine and Schiefer, Nicholas and Henighan, Tom and Kravec, Shauna and Hatfield-Dodds, Zac and Lasenby, Robert and Drain, Dawn and Chen, Carol and others},
  journal={arXiv preprint arXiv:2209.10652},
  year={2022}
}

@article{elmoznino2024high,
  title={High-performing neural network models of visual cortex benefit from high latent dimensionality},
  author={Elmoznino, Eric and Bonner, Michael F},
  journal={PLoS computational biology},
  volume={20},
  number={1},
  pages={e101a1792},
  year={2024},
  publisher={Public Library of Science San Francisco, CA USA}
}

@article{olshausen1997sparse,
  title={Sparse coding with an overcomplete basis set: A strategy employed by V1?},
  author={Olshausen, Bruno A and Field, David J},
  journal={Vision research},
  volume={37},
  number={23},
  pages={3311--3325},
  year={1997},
  publisher={Elsevier}
}

@article{cunningham2023sparse,
  title={Sparse autoencoders find highly interpretable features in language models},
  author={Cunningham, Hoagy and Ewart, Aidan and Riggs, Logan and Huben, Robert and Sharkey, Lee},
  journal={arXiv preprint arXiv:2309.08600},
  year={2023}
}

@article{bricken2023towards,
  title={Towards Monosemanticity: Decomposing Language Models With Dictionary Learning},
  author={Bricken, Trenton and Chen, Andy and Anthropic, et al.},
  journal={Transformer Circuits},
  year={2023},
  url={https://transformer-circuits.pub/2023/monosemantic-features}
}

@article{candes2006stable,
  title={Stable signal recovery from incomplete and inaccurate measurements},
  author={Candes, Emmanuel J and Romberg, Justin K and Tao, Terence},
  journal={Communications on Pure and Applied Mathematics: A Journal Issued by the Courant Institute of Mathematical Sciences},
  volume={59},
  number={8},
  pages={1207--1223},
  year={2006},
  publisher={Wiley Online Library}
}

@article{huh2024platonic,
  title={The platonic representation hypothesis},
  author={Huh, Minyoung and Cheung, Brian and Wang, Tongzhou and Isola, Phillip},
  journal={arXiv preprint arXiv:2405.07987},
  year={2024}
}

@article{groger2026revisitplatonic,
  title={Revisiting the Platonic Representation Hypothesis: An Aristotelian View},
  author={Gr\"oger, Fabian and Wen, Sho and Brbi\'c, Maria},
  journal={arXiv preprint arXiv:2602.14486},
  year={2026}
}

@article{reizinger2024cross,
  title={Cross-entropy is all you need to invert the data generating process},
  author={Reizinger, Patrik and Bizeul, Alice and Juhos, Attila and Vogt, Julia E and Balestriero, Randall and Brendel, Wieland and Klindt, David},
  journal={arXiv preprint arXiv:2410.21869},
  year={2024}
}

@article{klindt2023interpretable,
      title={Identifying Interpretable Visual Features in Artificial and Biological Neural Systems}, 
      author={David Klindt and Sophia Sanborn and Francisco Acosta and Frédéric Poitevin and Nina Miolane},
      year={2023},
      eprint={2310.11431},
      archivePrefix={arXiv},
      primaryClass={stat.ML},
      url={https://arxiv.org/abs/2310.11431}, 
}

@article{oneil2024compute,
  title={Compute optimal inference and provable amortisation gap in sparse autoencoders},
  author={O'Neill, Charles and Gumran, Alim and Klindt, David},
  journal={arXiv preprint arXiv:2411.13117},
  year={2024}
}

@article{donoho_compressed_2006,
    title = {Compressed sensing},
    volume = {52},
    url = {https://ieeexplore.ieee.org/abstract/document/1614066/?casa_token=vtpGjU5mzFcAAAAA:rU2N5NCWY2K9IaaU0GHdJEuOj8P0dFk39KnF-rchFhrMrAe9T0XiWvCPGgJ5pszVR4-UWxvhvg},
    number = {4},
    urldate = {2024-06-05},
    journal = {IEEE Transactions on information theory},
    author = {Donoho, David L.},
    year = {2006},
    note = {Publisher: IEEE},
    pages = {1289--1306},
}

@article{paulo2025sparse,
  title={Sparse autoencoders trained on the same data learn different features},
  author={Paulo, Gon{\c{c}}alo and Belrose, Nora},
  journal={arXiv preprint arXiv:2501.16615},
  year={2025}
}

@inproceedings{leask2025sparse,
    title={Sparse Autoencoders Do Not Find Canonical Units of Analysis},
    author={Patrick Leask and Bart Bussmann and Michael T Pearce and Joseph Isaac Bloom and Curt Tigges and Noura Al Moubayed and Lee Sharkey and Neel Nanda},
    booktitle={The Thirteenth International Conference on Learning Representations},
    year={2025},
    url={https://openreview.net/forum?id=9ca9eHNrdH}
}

@article{chanin2024absorption,
  title={A is for absorption: Studying feature splitting and absorption in sparse autoencoders},
  author={Chanin, David and Wilken-Smith, James and Dulka, Tom{\'a}{\v{s}} and Bhatnagar, Hardik and Golechha, Satvik and Bloom, Joseph},
  journal={arXiv preprint arXiv:2409.14507},
  year={2024}
}

@inproceedings{fel2025archetypal,
    title={Archetypal {SAE}: Adaptive and Stable Dictionary Learning for Concept Extraction in Large Vision Models},
    author={Thomas Fel and Ekdeep Singh Lubana and Jacob S. Prince and Matthew Kowal and Victor Boutin and Isabel Papadimitriou and Binxu Wang and Martin Wattenberg and Demba E. Ba and Talia Konkle},
    booktitle={Forty-second International Conference on Machine Learning},
    year={2025},
    url={https://openreview.net/forum?id=9v1eW8HgMU}
}

@article{khaligh2014deep,
  title={Deep supervised, but not unsupervised, models may explain IT cortical representation},
  author={Khaligh-Razavi, Seyed-Mahdi and Kriegeskorte, Nikolaus},
  journal={PLoS computational biology},
  volume={10},
  number={11},
  pages={e1003915},
  year={2014},
  publisher={Public Library of Science San Francisco, USA}
}

@article{yamins2014performance,
  title={Performance-optimized hierarchical models predict neural responses in higher visual cortex},
  author={Yamins, Daniel LK and Hong, Ha and Cadieu, Charles F and Solomon, Ethan A and Seibert, Darren and DiCarlo, James J},
  journal={Proceedings of the national academy of sciences},
  volume={111},
  number={23},
  pages={8619--8624},
  year={2014},
  publisher={National Academy of Sciences}
}

@article{khosla2021cortical,
  title={Cortical response to naturalistic stimuli is largely predictable with deep neural networks},
  author={Khosla, Meenakshi and Ngo, Gia H and Jamison, Keith and Kuceyeski, Amy and Sabuncu, Mert R},
  journal={Science Advances},
  volume={7},
  number={22},
  pages={eabe7547},
  year={2021},
  publisher={American Association for the Advancement of Science}
}

@article{schrimpf2021language,
    author = {Martin Schrimpf  and Idan Asher Blank  and Greta Tuckute  and Carina Kauf  and Eghbal A. Hosseini  and Nancy Kanwisher  and Joshua B. Tenenbaum  and Evelina Fedorenko },
    title = {The neural architecture of language: Integrative modeling converges on predictive processing},
    journal = {Proceedings of the National Academy of Sciences},
    volume = {118},
    number = {45},
    pages = {e2105646118},
    year = {2021},
    doi = {10.1073/pnas.2105646118},
    URL = {https://www.pnas.org/doi/abs/10.1073/pnas.2105646118},
    eprint = {https://www.pnas.org/doi/pdf/10.1073/pnas.2105646118}
}

@article{conwell2024large,
  title={A large-scale examination of inductive biases shaping high-level visual representation in brains and machines},
  author={Conwell, Colin and Prince, Jacob S and Kay, Kendrick N and Alvarez, George A and Konkle, Talia},
  journal={Nature communications},
  volume={15},
  number={1},
  pages={9383},
  year={2024},
  publisher={Nature Publishing Group UK London}
}

@article{prince2024contrastive,
  title={Contrastive learning explains the emergence and function of visual category-selective regions},
  author={Prince, Jacob S and Alvarez, George A and Konkle, Talia},
  journal={Science Advances},
  volume={10},
  number={39},
  pages={eadl1776},
  year={2024},
  publisher={American Association for the Advancement of Science}
}

@article{cadena2019deep,
  title={Deep convolutional models improve predictions of macaque V1 responses to natural images},
  author={Cadena, Santiago A and Denfield, George H and Walker, Edgar Y and Gatys, Leon A and Tolias, Andreas S and Bethge, Matthias and Ecker, Alexander S},
  journal={PLoS computational biology},
  volume={15},
  number={4},
  pages={e1006897},
  year={2019},
  publisher={Public Library of Science San Francisco, CA USA}
}

@article{reizinger2025empirically,
  title={An Empirically Grounded Identifiability Theory Will Accelerate Self-Supervised Learning Research},
  author={Reizinger, Patrik and Balestriero, Randall and Klindt, David and Brendel, Wieland},
  journal={bioRxiv},
  year={2025}
}

@article{klindt2025superposition,
  title={From superposition to sparse codes: interpretable representations in neural networks},
  author={Klindt, David and O'Neill, Charles and Reizinger, Patrik and Maurer, Harald and Miolane, Nina},
  journal={arXiv preprint arXiv:2503.01824},
  year={2025}
}

@article{smolensky_tensor_1990,
	title        = {Tensor product variable binding and the representation of symbolic structures in connectionist systems},
	author       = {Smolensky, Paul},
	year         = 1990,
	journal      = {Artificial intelligence},
	volume       = 46,
	number       = {1-2},
	pages        = {159--216},
	url          = {https://www.sciencedirect.com/science/article/pii/000437029090007M},
	urldate      = {2024-05-29},
	note         = {Publisher: Elsevier}
}

@article{arora_linear_2018,
	title        = {Linear {Algebraic} {Structure} of {Word} {Senses}, with {Applications} to {Polysemy}},
	author       = {Arora, Sanjeev and Li, Yuanzhi and Liang, Yingyu and Ma, Tengyu and Risteski, Andrej},
	year         = 2018,
	month        = {dec},
	journal      = {Transactions of the Association for Computational Linguistics},
	volume       = 6,
	pages        = {483--495},
	doi          = {10.1162/tacl_a_00034},
	issn         = {2307-387X},
	url          = {https://direct.mit.edu/tacl/article/43451},
	urldate      = {2024-05-28},
	language     = {en}
}

@InProceedings{kornblith2019similarity,
    title        = {Similarity of Neural Network Representations Revisited},
    author       = {Kornblith, Simon and Norouzi, Mohammad and Lee, Honglak and Hinton, Geoffrey},
    booktitle    = {Proceedings of the 36th International Conference on Machine Learning},
    pages        = {3519--3529},
    year         = {2019},
    editor       = {Chaudhuri, Kamalika and Salakhutdinov, Ruslan},
    volume       = {97},
    series       = {Proceedings of Machine Learning Research},
    month        = {09--15 Jun},
    publisher    = {PMLR},
    url          = {https://proceedings.mlr.press/v97/kornblith19a.html},
}

@article{rigotti2013importance,
	title        = {The importance of mixed selectivity in complex cognitive tasks},
	author       = {Rigotti, Mattia and Barak, Omri and Warden, Melissa R and Wang, Xiao-Jing and Daw, Nathaniel D and Miller, Earl K and Fusi, Stefano},
	year         = 2013,
	journal      = {Nature},
	publisher    = {Nature Publishing Group UK London},
	volume       = 497,
	number       = 7451,
	pages        = {585--590}
}

@article{garg2026many,
  title={How Many Features Can a Language Model Store Under the Linear Representation Hypothesis?},
  author={Garg, Nikhil and Kleinberg, Jon and Peng, Kenny},
  journal={arXiv preprint arXiv:2602.11246},
  year={2026}
}

@article{adler2024complexity,
  title={On the complexity of neural computation in superposition},
  author={Adler, Micah and Shavit, Nir},
  journal={arXiv preprint arXiv:2409.15318},
  year={2024}
}

@article{hanni2024mathematical,
  title={Mathematical models of computation in superposition},
  author={H{\"a}nni, Kaarel and Mendel, Jake and Vaintrob, Dmitry and Chan, Lawrence},
  journal={arXiv preprint arXiv:2408.05451},
  year={2024}
}

@article{longon2025superposition,
  title={Superposition disentanglement of neural representations reveals hidden alignment},
  author={Longon, Andr{\'e} and Klindt, David and Khosla, Meenakshi},
  journal={arXiv preprint arXiv:2510.03186},
  year={2025}
}

@misc{joseph2025prismaopensourcetoolkit,
      title={Prisma: An Open Source Toolkit for Mechanistic Interpretability in Vision and Video}, 
      author={Sonia Joseph and Praneet Suresh and Lorenz Hufe and Edward Stevinson and Robert Graham and Yash Vadi and Danilo Bzdok and Sebastian Lapuschkin and Lee Sharkey and Blake Aaron Richards},
      year={2025},
      eprint={2504.19475},
      archivePrefix={arXiv},
      primaryClass={cs.CV},
      url={https://arxiv.org/abs/2504.19475}, 
}

@article{ILSVRC15,
Author = {Olga Russakovsky and Jia Deng and Hao Su and Jonathan Krause and Sanjeev Satheesh and Sean Ma and Zhiheng Huang and Andrej Karpathy and Aditya Khosla and Michael Bernstein and Alexander C. Berg and Li Fei-Fei},
Title = {{ImageNet Large Scale Visual Recognition Challenge}},
Year = {2015},
journal   = {International Journal of Computer Vision (IJCV)},
doi = {10.1007/s11263-015-0816-y},
volume={115},
number={3},
pages={211-252}
}

@article{hubel1962receptive,
  title={Receptive fields, binocular interaction and functional architecture in the cat's visual cortex},
  author={Hubel, David H and Wiesel, Torsten N},
  journal={The Journal of Physiology},
  volume={160},
  number={1},
  pages={106--154},
  year={1962},
  publisher={Wiley}
}

@article{hubel1968receptive,
  title={Receptive fields and functional architecture of monkey striate cortex},
  author={Hubel, David H and Wiesel, Torsten N},
  journal={The Journal of Physiology},
  volume={195},
  number={1},
  pages={215--243},
  year={1968},
  publisher={Wiley}
}

@article{cadieu2014deep,
  title={Deep neural networks rival the representation of primate {IT} cortex for core visual object recognition},
  author={Cadieu, Charles F and Hong, Ha and Yamins, Daniel LK and Pinto, Nicolas and Ardila, Diego and Solomon, Ethan A and Majaj, Najib J and DiCarlo, James J},
  journal={PLOS Computational Biology},
  volume={10},
  number={12},
  pages={e1003963},
  year={2014},
  publisher={Public Library of Science}
}

@article{quiroga2005invariant,
  title={Invariant visual representation by single neurons in the human brain},
  author={Quian Quiroga, Rodrigo and Reddy, Leila and Kreiman, Gabriel and Koch, Christof and Fried, Itzhak},
  journal={Nature},
  volume={435},
  number={7045},
  pages={1102--1107},
  year={2005},
  publisher={Nature Publishing Group}
}

@article{kanwisher1997fusiform,
  title={The fusiform face area: a module in human extrastriate cortex specialized for face perception},
  author={Kanwisher, Nancy and McDermott, Josh and Chun, Marvin M},
  journal={Journal of Neuroscience},
  volume={17},
  number={11},
  pages={4302--4311},
  year={1997},
  publisher={Society for Neuroscience}
}
\bibliographystyle{unsrt}  

\appendix
\newpage

\section{Proof of Theorem~\ref{thm:rsa-correlation} (RSA)}
\label{app:proof_rsa}

Our derivation relies on the following standard assumptions about the distribution of the latent variable vectors $z_i$:
\begin{enumerate}
    \item The latent vectors $z_1, \dots, z_d$ are i.i.d.
    \item The distribution has zero mean: $\mathbb{E}[z_i] = \mathbf{0}$.
    \item The distribution is white: $\mathbb{E}[z_i z_j\T] = \delta_{ij} I_n$.
\end{enumerate}
An immediate consequence is that for a large number of i.i.d.\ samples,
\begin{equation}
\frac{1}{d} Z Z\T = \frac{1}{d} \sum_{i=1}^d z_i z_i\T \to \mathbb{E}[z z\T] = I_n \quad \implies \quad Z Z\T \approx d I_n.
\end{equation}

\paragraph{RSMs in terms of Gram matrices.}
\begin{align}
    M(Y_a) &= (A_a Z)\T (A_a Z) = Z\T G_a Z, \\
    M(Y_b) &= (A_b Z)\T (A_b Z) = Z\T G_b Z,
\end{align}
so $M(Y_a)_{ij} = z_i\T G_a z_j$.

\paragraph{Mean of off-diagonal RSM elements.}
The empirical mean of all RSM elements in the asymptotic limit is
\begin{align}
    \mu_Y &\equiv \frac{1}{d^2} \sum_{i,j} M(Y)_{ij} = \frac{1}{d^2} \sum_{i,j} z_i\T G z_j \\
    &= \frac{1}{d} \sum_i z_i\T G \left[ \frac{1}{d} \sum_j z_j \right] \approx \frac{1}{d} \sum_i z_i\T G \,\mathbb{E}[z_j] = 0.
\end{align}
The off-diagonal-upper-triangular mean $\mu^{\mathrm{UT}}_Y$ differs from $\mu_Y$ only by $\mathcal{O}(1/d)$ correction and converges to 0.

\paragraph{Covariance and variance.}
\begin{align}
    \mathrm{Cov}(r_a, r_b) &\approx \frac{1}{d(d-1)} \sum_{i,j} (z_i\T G_a z_j)(z_j\T G_b\T z_i) \\
    &= \frac{1}{d-1} \sum_i z_i\T G_a \left[ \frac{1}{d} \sum_j z_j z_j\T \right] G_b z_i \\
    &\approx \operatorname{Tr}(G_a G_b).
\end{align}
Setting $G_a = G_b$ gives $\mathrm{Var}(r_a) = \|G_a\|_F^2$. Substituting into the Pearson formula:
\begin{equation}\label{eq:rsa-correlation}
    \boxed{\rho(Y_a, Y_b) \approx \frac{\operatorname{Tr}(G_a G_b)}{\sqrt{\|G_a\|_F^2 \|G_b\|_F^2}} = \frac{\langle G_a, G_b \rangle_F}{\|G_a\|_F \|G_b\|_F}.}
\end{equation}

\section{Proof of Theorem~\ref{thm:cka_lin} (Linear CKA)}
\label{app:proof_cka_lin}

\paragraph{Centering.} With $H_d = I_d - \frac{1}{d}\mathbf{1}\mathbf{1}\T$,
\begin{align}
Y H_d &= Y - \frac{1}{d} Y \mathbf{1}\mathbf{1}\T = Y - \bar y \,\mathbf{1}\T \\
&\approx Y - (A \,\mathbb{E}[z]) \mathbf{1}\T = Y.
\end{align}

\paragraph{HSIC computation.} With linear kernel $K_a = Y_a\T Y_a$, $K_b = Y_b\T Y_b$, and using $H_d^2 = H_d$,
\begin{align}
\operatorname{Tr}(K_a H_d K_b H_d) &= \operatorname{Tr}(Y_a\T Y_a Y_b\T Y_b) \\
&= \operatorname{Tr}(Z\T G_a Z Z\T G_b Z) \\
&= \operatorname{Tr}(Z Z\T G_a Z Z\T G_b) \\
&\approx d^2 \operatorname{Tr}(G_a G_b).
\end{align}
Similarly $\operatorname{Tr}(K_a H_d K_a H_d) \approx d^2 \operatorname{Tr}(G_a^2)$. Substituting:
\begin{equation}
\mathrm{CKA}_{\mathrm{Lin}}(Y_a, Y_b) \approx \frac{\operatorname{Tr}(G_a G_b)}{\sqrt{\operatorname{Tr}(G_a^2) \operatorname{Tr}(G_b^2)}},
\end{equation}
identical to the asymptotic RSA result.

\section{Proof of Theorem~\ref{thm:ols} (Linear Regression)}
\label{app:proof_linreg}

\paragraph{OLS estimator.} Standard OLS:
\begin{equation}
\hat W = Y_b Y_a\T (Y_a Y_a\T)^{-1}.
\end{equation}
Substituting $Y_a = A_a Z$, $Y_b = A_b Z$ and $Z Z\T \approx d I_n$:
\begin{align}
Y_b Y_a\T &\approx d(A_b A_a\T), \\
Y_a Y_a\T &\approx d(A_a A_a\T), \\
\hat W &\approx A_b A_a\T (A_a A_a\T)^{-1}.
\end{align}

\paragraph{Mean-squared error.} Prediction error $E = Y_b - \hat W Y_a = (A_b - \hat W A_a) Z$, so
\begin{align}
\|E\|_F^2 &= \operatorname{Tr}((A_b - \hat W A_a)\T (A_b - \hat W A_a) Z Z\T) \\
&\approx d \|A_b - \hat W A_a\|_F^2.
\end{align}
Dividing by the total number of predicted entries $m_b d$:
\begin{equation}
\mathrm{MSE}(Y_b | Y_a) \approx \frac{1}{m_b} \|A_b - \hat W A_a\|_F^2.
\end{equation}

\paragraph{Explained variance.} With $\bar y_b \approx 0$, $\mathrm{SS}_{\mathrm{tot}} \approx d\,\operatorname{Tr}(A_b\T A_b)$ and $\mathrm{SS}_{\mathrm{res}} \approx d\,\operatorname{Tr}((A_b - \hat W A_a)\T (A_b - \hat W A_a))$:
\begin{equation}
R^2 = 1 - \frac{\operatorname{Tr}((A_b - \hat W A_a)\T (A_b - \hat W A_a))}{\operatorname{Tr}(A_b\T A_b)}.
\end{equation}

\paragraph{Pearson correlation.} With indices $i,j$ corresponding to system dimensions,
\begin{align}
\mathrm{Cov}(\hat Y_b^i, Y_b^j) &\approx (\hat W A_a A_b\T)_{ij}, \\
\mathrm{Var}(\hat Y_b^i) &\approx (\hat W A_a A_b\T)_{ii}, \\
\mathrm{Var}(Y_b^j) &\approx (A_b A_b\T)_{jj}.
\end{align}
Therefore
\begin{equation}
\rho(\hat Y_b, Y_b)_{ij} \approx \frac{(\hat W A_a A_b\T)_{ij}}{\sqrt{(\hat W A_a A_b\T)_{ii} (A_b A_b\T)_{jj}}}.
\end{equation}

\section{Gaussian-Projection Simulations}
\label{app:gaussian_sims}
\subsection{Full feature overlap}
This appendix validates the theory of Section~\ref{sec:alignment} using purely random Gaussian projections (i.e., the idealized analytic setup), independent of any SAE machinery. These experiments are the ones reported in the current draft of the paper; we retain them here as direct empirical confirmation of the closed-form theorems.

\paragraph{Simulation setup.}
We fix the latent space dimension at $n = 1000$ and set both system dimensions equal, $m_a = m_b \equiv m$. Latent feature vectors are sampled as $Z \in \mathbb{R}^{n \times d}$ with $d = 16384$ samples, where each entry is drawn uniformly, $Z_{i,j} \sim \mathcal{U}(0,1)$. To enforce sparsity, we retain only the $k$ largest activations within each sample and set the remainder to zero, yielding a dataset in which each latent vector has at most $k$ non-zero entries.

The two projection matrices $A_a, A_b \in \mathbb{R}^{m \times n}$ are drawn independently with entries $A_{i,j} \sim \mathcal{N}(0,1)$, producing two distinct linear compressions of the same latent content. We vary the degree of superposition by sweeping $m$ over the range $[0.2\,k\ln(n/k),\; 5\,k\ln(n/k)]$. For each value of $m$, we repeat this procedure over 200 independent draws of the projection matrix pair $(A_a, A_b)$ and report the mean alignment score across draws as a function of $m$. All experiments are repeated across several values of $k$  (2, 4, 8, 16, 32, and 64) to assess the role of sparsity, and alignment is measured using RSA, linear regression ($R^2$), and CKA with a linear kernel.

Throughout, we annotate results relative to the scaling predicted by compressed sensing theory. For a Gaussian random projection matrix, the restricted isometry property holds with high probability when the number of measurements satisfies $m = \mathcal{O}(k \ln(n/k))$ \citep{donoho_compressed_2006, candes2006stable}. The exact constant is not known in general, so we use
\begin{equation}
    m_{\text{cs}} = k \ln\!\frac{n}{k}
\end{equation}
as a natural unit for the system dimension, distinguishing the regime in which latent features are in principle recoverable (CS; $m \geq m_{\text{cs}}$) from the regime in which they are not (No CS; $m < m_{\text{cs}}$). Note that the true recovery threshold may differ from $m_{\text{cs}}$ by a constant factor.

\subsection{Partial feature overlap}
We define the feature overlap ratio $u$ as the geometric-mean-normalized count of shared features:
\begin{equation}
u \equiv \frac{l_{ab}}{\sqrt{l_a l_b}},
\end{equation}
where $l_a$, $l_b$, and $l_{ab}$ denote the number of features in system $A$, system $B$, and the number of features shared between them, respectively. When $u = 1$, the two systems share all of their features (full overlap); when $u < 1$, each system retains features unique to itself in addition to the shared subset.

We fix $l_a = l_b \equiv l = 1000$ and vary $u$ from $0.7$ to $1.0$, so that the number of shared features ranges from $l_{ab} = 0.7 \cdot l$ to $l_{ab} = l$. For each value of $u$, the total latent dimensionality is $n = l_a + l_b - l_{ab} = 2l - u \cdot l$, which grows as overlap decreases because the two systems collectively span more distinct features.

Latent features are sampled as $Z \in \mathbb{R}^{n \times d}$ with $d = 8192$ and entries $Z_{i,j} \sim \mathcal{U}(0,1)$, then sparsified by retaining only the top $n \cdot k/l$ activations per sample. To implement the desired feature overlap structure, projection matrices $A_a \in \mathbb{R}^{m_a \times n}$ and $A_b \in \mathbb{R}^{m_b \times n}$ are drawn with entries $A_{i,j} \sim \mathcal{N}(0,1)$ and then column-masked: $A_a$ is constrained to have non-zero columns only in the $l_a$ dimensions corresponding to system $A$'s features, and $A_b$ likewise for its $l_b$ dimensions, with exactly $l_{ab}$ columns active in both matrices. This ensures that each system projects only its own features into neural activity, while the shared features are encoded by both. 

We ask whether superposition can distort alignment even when the two systems differ in size. To this end, we fix $m_a = 3 \cdot k\ln(l/k)$, placing system $A$ comfortably in the CS regime, and vary $m_b$ over $[ k\ln(l/k), 3 \cdot k\ln(l/k) ]$, so that system $B$ ranges from heavily compressed to equally uncompressed. This procedure is repeated across multiple overlap ratios $u$ (0.7 - 1.0) and sparsity fractions $k/l$ (0.05, 0.1, 0.15 and 0.2). Since our analytical results in Section~\ref{sec:theory} show that RSA and linear CKA converge to the same closed-form expression in the asymptotic limit (Theorems~\ref{thm:rsa-correlation} and~\ref{thm:cka_lin}), and since the full overlap simulations confirm that all three metrics exhibit similar trends as a function of superposition compression (Figure~\ref{fig:gauss_full}), we use CKA with a linear kernel as a representative metric for the partial overlap experiments, without loss of generality.

\section{Supervised TopK SAE: Training Details}
\label{app:sup_sae_details}

We generate set of latents $Z \in \mathbb{R}^{n \times d}$, where $n=1024$, through random uniform sampling, then zero out all but the top k-activating entries in every column to fulfill the sparsity condition. We choose $k=32$ and $k=64$ respectively. The TopK SAE architectures are as follows:

\begin{align*}
    \hat{z} & = \text{TopK}\left( \text{ReLU}\left(E \cdot y \right) + b \right) \\
    y & = D \cdot z
\end{align*}
Where $E \in \mathbb{R}^{n \times m}$, $D \in \mathbb{R}^{m \times n}$, and $b\in \mathbb{R}^n$ are the encoder matrix, decoder matrix, and encoder bias respectively.
We train the supervised SAE by minimizing the following objective:
\begin{align*}
    L = \mathbb{E}\left[ |z - \hat{z}|^2\right]
\end{align*}

We randomly initialize $D$, $E$ and $b$ through standard normal distribution, then optimize with Adam optimizer with initial learning rate of 1e-3. We use a training set of length $d = 8192$, and feed the whole dataset as a single batch during training, which continues for 2e5 training steps.

We sample 50 random seeds and report the mean results for alignment measurement.

\section{Unsupervised TopK SAE: Training Details}
\label{app:unsup_sae_details}
Analogous to \ref{app:sup_sae_details}, we train unsupervised SAE $y$ by minimizing the following loss function:
\begin{align*}
L & = \mathbb{E}\left[ |y - \hat{y}|^2\right] \\
\hat{y} & = D \cdot \hat{z} \\
\hat{z} & = \text{TopK}\left( \text{ReLU}\left(E \cdot y \right) + b \right)
\end{align*}
We set dictionary size $n' = n = 1024$ and sparsity matched to true $k$ (32/64). We optimize with Adam optimizer with initial learning rate of 1e-3. We use a training set of length $d = 8192$, and feed the whole dataset as a single batch during training, which continues for 1e6 training steps.

\section{Pretrained-SAE Experiments: Details}
\label{app:real_sae_details}

We use the available DINO and CLIP models on Prisma repository \citep{joseph2025prismaopensourcetoolkit}, which consist of sub layers resid-post and mlp-out in CLIP and sub layers resid-post in DINO, giving 630 unique model layer- model layer pairs and 1260 data point. We obtain neural activations of each sublayer on the ImageNet1K validation dataset \citep{ILSVRC15}, which consists of 50000 natural images. For each pair of model layer $(a,b)$ comparison, we train TopK SAEs setting $k=64$ and latent dimension $n=2m$, $m$ again being the number of neurons in the original system. The TopK SAEs are trained jointly, constrained to a shared latent space through minimizing the loss function
\begin{equation}
\mathcal{L} = \sum_{i,j \in \{a,b\}} \mathrm{MSE}(y_i, \hat{y}_{ij}), \qquad \hat{y}_{ij} = D_i \cdot \text{TopK}_k(E_j \, y_j),
\end{equation}

We pick the SAE latent dimension to be $n=2 \times m$. We initialize the SAE encoder and decoder weights with a standard normal distribution. We perform a train test split of test size $0.2$. i.e. on the ImageNet1K validation, train set contains 40000 stimuli and test contains 10000. We use AdamW optimizer with a learning rate of 1e-3, and feed the whole training set at each gradient step. We train the TopK SAEs for 2e6 epochs.

We report the alignment (pearson r between OLS predictions and targets) on the test set.

\end{document}